\newtheorem{assumption}{Assumption}
\title[Z-Score Gradient Filtering for SAM]{Sharpness-Aware Minimization with Z-Score Gradient Filtering}
\begin{document}

\maketitle

\begin{abstract}%
Deep neural networks achieve high performance across many domains but can still face challenges in generalization when optimization is influenced by small or noisy gradient components. Sharpness-Aware Minimization improves generalization by perturbing parameters toward directions of high curvature, but it uses the entire gradient vector, which means that small or noisy components may affect the ascent step and cause the optimizer to miss optimal solutions. We propose Z-Score Filtered Sharpness-Aware Minimization, which applies Z-score based filtering to gradients in each layer. Instead of using all gradient components, a mask is constructed to retain only the top percentile with the largest absolute Z-scores. The percentile threshold $Q_p$ determines how many components are kept, so that the ascent step focuses on directions that stand out most compared to the average of the layer. This selective perturbation refines the search toward flatter minima while reducing the influence of less significant gradients. Experiments on CIFAR-10, CIFAR-100, and Tiny-ImageNet with architectures including ResNet, VGG, and Vision Transformers show that the proposed method consistently improves test accuracy compared to Sharpness-Aware Minimization and its variants. The code repository is available at: \url{https://github.com/YUNBLAK/Sharpness-Aware-Minimization-with-Z-Score-Gradient-Filtering}

\end{abstract}

\section{Introduction}

Deep neural networks (DNNs)~\cite{goodfellow2016deep,dl1,dl2} achieve strong performance in tasks such as image classification~\cite{krizhevsky2012imagenet,he2016deep,krizhevsky2009learning}, speech recognition~\cite{hinton2012deep,speech1,speech2}, and natural language understanding~\cite{vaswani2017attention,nlp1,nlp2}. They are typically trained by minimizing empirical loss using optimizers such as SGD and Adam~\cite{robbins1951stochastic,sgd,kingma2015adam,adamw}.

\begin{wrapfigure}{l}{0.35\textwidth}
    \centering
    \includegraphics[width=\linewidth]{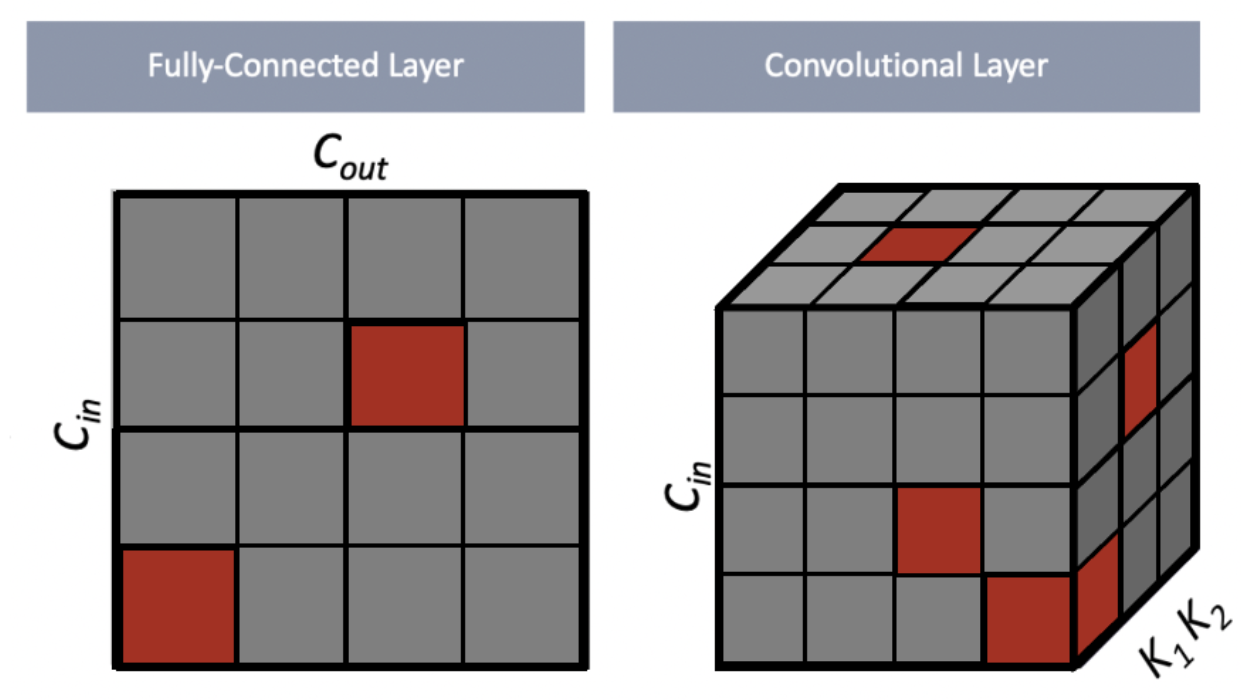}
    \vspace{-20pt}
    \caption{Ascent-step gradients after Z-score filtering.}
    \label{fig:intro2}
\end{wrapfigure}

Despite their success, DNNs often overfit~\cite{overfit1,overfit2,overfit3}, and poor generalization is frequently attributed to convergence toward sharp minima—regions of high curvature where small perturbations cause large increases in loss~\cite{hochreiter1997flat,keskar2017large,foret2021sharpness}. This issue becomes more pronounced in large models, where only a small subset of parameter directions meaningfully contributes to the curvature of the loss landscape~\cite{general1,general2,general3}. Sharpness-Aware Minimization (SAM) addresses this by perturbing parameters in the gradient direction and minimizing the worst-case loss within an $\ell_2$ ball~\cite{foret2021sharpness,kwon2021asam}.

\paragraph{Existing Problem.}
SAM constructs its ascent direction using the \emph{entire} gradient vector, including many small-magnitude components that provide little curvature information and largely reflect minibatch noise. Because the gradient is normalized before the perturbation is applied, even tiny noisy coordinates influence the ascent direction as much as large, informative ones. This distorts the perturbation, misaligns it with sharp curvature directions, and amplifies stochasticity in high-dimensional parameter spaces. In contrast, large gradient components typically correspond to directions of meaningful curvature, and focusing on them yields a more stable and faithful estimation of sharpness that better reflects the local geometry of the loss.

\paragraph{Our idea.}
We propose \textit{Z-Score Filtered Sharpness-Aware Minimization (ZSharp)}, which applies layer-wise Z-score normalization~\cite{yun2024znorm} to identify statistically significant gradient components and retains only the top percentile (e.g., top 5\%) for constructing the ascent direction. The percentile threshold \(Q_p\) controls the fraction \((1 - Q_p)\) of coordinates kept. Unlike ASAM~\cite{kwon2021asam} and Friendly-SAM~\cite{kim2022friendly}, ZSharp requires only a single hyperparameter and remains compatible with various optimizers and architectures.

We evaluate ZSharp on CIFAR-10~\cite{krizhevsky2009learning}, CIFAR-100~\cite{krizhevsky2009learning}, and Tiny-ImageNet~\cite{tinyimagenet} with models including ResNet~\cite{he2016deep}, VGG~\cite{simonyan2015very}, and Vision Transformers~\cite{dosovitskiy2020image}. Across all settings, ZSharp consistently matches or outperforms SAM and its variants, demonstrating that selectively filtering out small gradients leads to more stable ascent directions and better generalization.

\section{Methodology}

We propose Z-Score Filtered Sharpness-Aware Minimization \textit{(ZSharp)}, a method that improves neural network training by using Z-score normalization~\cite{yun2024znorm} and filtering in a Sharpness-Aware framework~\cite{foret2021sharpness}. When using the full gradient in the ascent step, small and noisy gradient components can weaken important curvature directions and may cause the optimizer to miss optimal convergence points. ZSharp mitigates this by retaining only the larger gradient components in each layer during the ascent step, which reduces the influence of noise.  \\

\noindent\textbf{Preliminaries.} 
We consider a supervised learning framework with a dataset \(\mathcal{D} = \{(\mathbf{x}_i, y_i)\}_{i=1}^N\), where \(\mathbf{x}_i \in \mathbb{R}^m\) denotes input features and \(y_i \in \mathcal{Y}\) represents labels. The neural network, parameterized by weights \(w \in \mathbb{R}^d\), defines a mapping \(f: \mathbb{R}^m \times \mathbb{R}^d \to \mathcal{Y}\). For \(L\) layers, the \(\ell\)-th layer’s parameters are \(w^{(\ell)} \in \mathbb{R}^{d_\ell}\), with \(\sum_{\ell=1}^L d_\ell = d\). The empirical loss \(L(w) = \frac{1}{N} \sum_{i=1}^N \ell(f(\mathbf{x}_i; w), y_i)\) has gradient \(\nabla L(w) \in \mathbb{R}^d\), with \(\ell_2\)-norm \(\|\nabla L(w)\|_2\). The percentile threshold \(Q_p\) controls the proportion of gradient components retained after filtering, where \((1 - Q_p)\) denotes the fraction of components with the largest magnitudes that are kept for the ascent step.

\subsection{Sharpness-Aware Minimization}
In standard SAM~\cite{foret2021sharpness, sam1, sam2}, the procedure consists of: (i) \emph{Ascent:} perturbing parameters in the direction that increases the loss the most. 
\begin{equation}
\epsilon_{\text{SAM}} = \rho \cdot \frac{\nabla L(w)}{\|\nabla L(w)\|_2 + \delta}, \quad \tilde{w} = w + \epsilon_{\text{SAM}},
\end{equation}
(ii) \emph{Minimization:} compute the gradient at the perturbed point, $g = \nabla L(\tilde{w})$,  
(iii) \emph{Weight update:} apply a base optimizer $O$ (e.g., SGD, Adam) to update parameters,  
\begin{equation}
w \leftarrow w - \eta\, O(g).
\end{equation}
ZSharp keeps steps (ii) and (iii) identical to SAM, but replaces $\nabla L(w)$ in the ascent step with the filtered gradient $\nabla L(w)_{\Omega}$, focusing the perturbation on statistically significant directions.


\subsection{Z-Score Filtered Ascent Step} \label{zsharp}
ZSharp modifies the ascent step of SAM~\cite{foret2021sharpness, sam1, sam2} 
by applying Layer-wise Z-Score Normalization~\cite{yun2024znorm} and retaining 
only the top $(1-Q_p)$ fraction of components by absolute Z-score.

Let $\{g^{(\ell)}\}_{\ell=1}^L$ denote the gradient vectors for each layer $\ell$, 
with $g^{(\ell)} \in \mathbb{R}^{d_\ell}$. 
For each layer, we define the Z-score normalized gradient as $\Omega(g^{(\ell)})_i = 
\frac{g^{(\ell)}_i - \mu(g^{(\ell)})}{\sigma(g^{(\ell)})}, 
\quad i = 1, \ldots, d_\ell$ where $\mu(g^{(\ell)}) = \frac{1}{d_\ell}\sum_{i=1}^{d_\ell} g^{(\ell)}_i$ and
$\sigma(g^{(\ell)}) = \sqrt{\frac{1}{d_\ell}\sum_{i=1}^{d_\ell} 
\bigl(g^{(\ell)}_i - \mu(g^{(\ell)})\bigr)^2}$
This ensures that each layer’s gradient is centered and rescaled independently, 
emphasizing components that deviate most relative to the layer statistics.

Then, we have $\Omega(\nabla L(w))$, layer-wise Z-score normalized gradients of the loss $L$ at parameters $w$.  
We then define a binary mask $\mathbf{m} \in \{0,1\}^d$ as
\begin{equation}
m_j = 
\begin{cases}
1 & \text{if } |\Omega(\nabla L(w))_j| > q_{Q_p}, \\
0 & \text{otherwise},
\end{cases}
\end{equation}
where $q_{Q_p}$ is the $Q_p$-th percentile of $|\Omega(\nabla L(w))|$.  
The filtered gradient is $\nabla L(w)_{\Omega} = \nabla L(w) \odot \mathbf{m}$, and the perturbation is computed as:
\begin{equation}
\epsilon =
\begin{cases}
\rho \cdot \frac{\nabla L(w)_{\Omega}}{\|\nabla L(w)_{\Omega}\|_2 + \delta} & \|\nabla L(w)_{\Omega}\|_2 > 0, \\
\rho \cdot \frac{\nabla L(w)}{\|\nabla L(w)\|_2 + \delta} & \text{otherwise},
\end{cases}
\end{equation}
where $\rho>0$ is the perturbation radius and $\delta=10^{-8}$ ensures numerical stability. The ascent update is $\tilde{w} = w + \epsilon.$

\section{Theoretical Analysis}
We adapt the standard SAM convergence proof~\cite{foret2021sharpness,sam_proof,sam2} to the ZSharp setting, where the ascent step uses the Z-score filtered gradient. Theorem~\ref{theorem5} establishes convergence under the same smoothness and bounded-variance assumptions as SAM~\cite{foret2021sharpness,sam_proof,sam2}. Detailed proofs are in Appendix~\ref{convergence}.

\begin{assumption}[$\beta$-smoothness]
\label{append:smoothness}
The loss function \(L:\mathbb{R}^d \to \mathbb{R}\) is \(\beta\)-smooth, meaning its gradient is \(\beta\)-Lipschitz continuous, $\|\nabla L(u) - \nabla L(v)\| \le \beta \|u - v\|, \quad \forall\, u,v \in \mathbb{R}^d.$ Equivalently, for any \(u,v\), $L(u) \le L(v) + \langle \nabla L(v),\, u - v \rangle + \tfrac{\beta}{2}\|u - v\|^2.$
\end{assumption}

\begin{assumption}[Unbiased stochastic gradient]
\label{append:unbiased}
At each iteration \(t\), the stochastic gradient provides an unbiased estimate of the true gradient: $\mathbb{E}[\nabla L_t(w_t)] = \nabla L(w_t).$
\end{assumption}

\begin{lemma}
Given a $\beta$-smooth loss function $L(x)$, 
the following bound holds:
\begin{equation}
\langle \nabla L(u) - \nabla L(v), u - v \rangle 
\ge -\beta \|u - v\|^2.
\end{equation}
\end{lemma}

\begin{lemma}
\label{lemma:A2}
Let \(L\) be a \(\beta\)-smooth loss function. At iteration \(t\), let 
\(\nabla L_t(w_t)_{\Omega}\) be an unbiased stochastic estimator of 
\(\nabla L(w_t)_{\Omega}\) with bounded variance $\mathbb{E}\!\left[\left\|\nabla L_t(w_t)_{\Omega} - \nabla L(w_t)_{\Omega}\right\|^2\right] \le \frac{\sigma_{\Omega}^2}{b}$ Then, for any \(r>0\),
\begin{align}
\mathbb{E}[
\left\langle
\nabla  L(w_t + r\,\nabla L_t(w_t)_{\Omega}),\nabla L(w_t)\right\rangle] \ge \frac{1}{2}\,\mathbb{E}\!\left[\|\nabla L(w_t)\|^2\right] - \frac{\beta^2 r^2}{2}\,\mathbb{E}\!\left[\|\nabla L(w_t)_{\Omega}\|^2\right]
- \frac{\beta^2 r^2\sigma_{\Omega}^2}{2b}.
\end{align}
\end{lemma}

\begin{lemma}
Assume that $L$ is $\beta$-smooth and the filtered stochastic gradient 
$\nabla L_t(w_t)_{\Omega}$ satisfies the variance bound
$\mathbb{E}\!\left[\|\nabla L_t(w_t)_{\Omega} - \nabla L(w_t)_{\Omega}\|^2\right] 
\le \frac{\sigma_{\Omega}^2}{b}.$ Consider the ZSharp-SAM updates $w_{t+1/2} = w_t + r \,\nabla L_t(w_t)_{\Omega},$ and $w_{t+1}   = w_{t+1/2} - \eta \,\nabla L_t(w_{t+1/2}).$ If the step size satisfies $\eta \le \tfrac{1}{4\beta}$ and the ascent radius satisfies 
$\beta^2 r^2 \le \tfrac{1}{4}$, then
\begin{align}
\mathbb{E}[L(w_{t+1})]
\le 
\mathbb{E}[L(w_t)]
- \frac{\eta}{4}\,\mathbb{E}\big[\|\nabla L(w_t)\|^2\big]
+ \frac{2\eta\beta^2 r^2}{b}\,\sigma_{\Omega}^2
+ \frac{\eta^2 \beta}{b}\,\sigma_{\Omega}^2.
\end{align}
\end{lemma}

\begin{theorem}
Assume that $L$ is $\beta$-smooth and that the filtered stochastic gradient
$\nabla L_t(w_t)_{\Omega}$ satisfies the variance bound $\mathbb{E}\!\left[\|\nabla L_t(w_t)_{\Omega} - \nabla L(w_t)_{\Omega}\|^2\right] le \frac{\sigma_{\Omega}^2}{b}.$
All expectations are taken over the mini-batch at iteration $t$.
Suppose the ZSharp-SAM updates $w_{t+1/2} = w_t + r \,\nabla L_t(w_t)_{\Omega} and $$w_{t+1}= w_{t+1/2} - \eta \,\nabla L_t(w_{t+1/2})$ use step size $\eta$ and ascent radius $r$ satisfying
$\eta \le \tfrac{1}{4\beta}$ and $\beta^2 r^2 \le \tfrac{1}{4}$.
Then ZSharp-SAM satisfies
\begin{align}
\frac{1}{T} \sum_{t=0}^{T-1} \mathbb{E} \big[ \|\nabla L(w_t)\|^2 \big]
\le \frac{4}{T \eta} \big( L(w_0) - \mathbb{E}[L(w_T)] \big)
+ \frac{8\beta^2 r^2}{b}\,\sigma_{\Omega}^2
+ \frac{4\eta \beta}{b}\,\sigma_{\Omega}^2.
\end{align}
\label{theorem5}
\end{theorem}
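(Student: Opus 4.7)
The plan is to derive Theorem~\ref{theorem5} as a direct corollary of the per-iteration descent inequality established in the preceding lemma (Lemma~3) via a standard telescoping argument; essentially all of the nontrivial work has already been absorbed into the statement of that lemma.

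First, I would rearrange the conclusion of Lemma~3 to isolate the squared gradient norm. Multiplying through by $4/\eta$ and moving the loss-difference term to the right gives
\begin{equation*}
\mathbb{E}\bigl[\|\nabla L(w_t)\|^2\bigr] \;\le\; \frac{4}{\eta}\bigl(\mathbb{E}[L(w_t)] - \mathbb{E}[L(w_{t+1})]\bigr) + \frac{8\beta^2 r^2}{b}\sigma_{\Omega}^2 + \frac{4\eta\beta}{b}\sigma_{\Omega}^2.
\end{equation*}
Second, I would sum this inequality over $t = 0, 1, \ldots, T-1$ and observe that the loss-difference terms telescope, so that $\sum_{t=0}^{T-1}\bigl(\mathbb{E}[L(w_t)] - \mathbb{E}[L(w_{t+1})]\bigr) = L(w_0) - \mathbb{E}[L(w_T)]$. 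The variance terms do not depend on $t$, so they simply accumulate a factor of $T$. Finally, dividing both sides by $T$ yields exactly the claimed bound on the averaged squared gradient norm.

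The main obstacle therefore is not in this telescoping step but in verifying that the per-step descent inequality of Lemma~3 is correctly applied and that the two hypotheses $\eta \le 1/(4\beta)$ and $\beta^2 r^2 \le 1/4$ are propagated consistently. These two conditions play clearly delineated roles upstream: $\eta \le 1/(4\beta)$ is what enables the $\beta$-smoothness upper bound on $L(w_{t+1})$ to absorb the $\|\nabla L_t(w_{t+1/2})\|^2$ term, while $\beta^2 r^2 \le 1/4$ is the condition under which the curvature correction $(\beta^2 r^2/2)\,\mathbb{E}\|\nabla L(w_t)_{\Omega}\|^2$ appearing in Lemma~\ref{lemma:A2} cannot overwhelm the $\tfrac{1}{2}\mathbb{E}\|\nabla L(w_t)\|^2$ term it is subtracted from. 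Together, they guarantee that a strictly positive coefficient of the form $-\eta/4$ remains in front of $\mathbb{E}\|\nabla L(w_t)\|^2$ after combining the two bounds, which is precisely what makes the telescoping sum produce a meaningful $O(1/(T\eta))$ rate plus a variance-dependent noise floor of order $\sigma_{\Omega}^2/b$, matching the standard SAM convergence guarantee but expressed in terms of the filtered-gradient variance $\sigma_{\Omega}^2$ rather than the full-gradient variance.
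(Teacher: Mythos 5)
Your proposal matches the paper's own proof: the paper likewise takes the one-step descent bound from the preceding lemma, rearranges it, telescopes the expected loss differences over $t=0,\dots,T-1$, and divides by $T\eta/4$ to obtain the stated bound. Your additional remarks on where $\eta \le 1/(4\beta)$ and $\beta^2 r^2 \le 1/4$ enter are consistent with how those conditions are used upstream in that lemma, so there is nothing substantive to correct.
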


\begin{corollary}[Convergence with diminishing step sizes]
Assume that $L$ is $\beta$-smooth, bounded below by $L_{\inf}$, and $\nabla L_t(w_t)_{\Omega}$ satisfies $\mathbb{E}\!\left[\|\nabla L_t(w_t)_{\Omega} - \nabla L(w_t)_{\Omega}\|^2\right]
$$\le \frac{\sigma_{\Omega}^2}{b}.$
Consider the ZSharp-SAM updates $w_{t+1/2} = w_t + r_t \,\nabla L_t(w_t)_{\Omega}, $ and $w_{t+1}   = w_{t+1/2} - \eta_t \,\nabla L_t(w_{t+1/2}),$ with step sizes $\{\eta_t\}_{t\ge 0}$ and ascent radii $\{r_t\}_{t\ge 0}$ satisfying
$\eta_t \le \frac{1}{4\beta} $$ \beta^2 r_t^2 \le \frac{1}{4} \text{for all } t,$
$\sum_{t=0}^\infty \eta_t = \infty, $$\sum_{t=0}^\infty \eta_t^2 < \infty, $$\sum_{t=0}^\infty \eta_t r_t^2 < \infty.$
Then
\begin{align}
\sum_{t=0}^\infty \eta_t\,\mathbb{E}\big[\|\nabla L(w_t)\|^2\big] < \infty, \quad \text{and in particular} \quad \liminf_{t\to\infty} \mathbb{E}\big[\|\nabla L(w_t)\|^2\big] = 0.
\end{align}
\label{corollary}
\end{corollary}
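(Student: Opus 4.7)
The plan is to reduce the corollary to the per-iteration descent inequality from the preceding lemma, then use a telescoping argument combined with the summability hypotheses on $\eta_t$ and $r_t$. The lemma was stated for fixed $\eta$ and $r$, but its proof applies verbatim at each iteration $t$ as long as $\eta_t \le 1/(4\beta)$ and $\beta^2 r_t^2 \le 1/4$ hold, since these conditions are precisely what was needed to absorb the smoothness terms. So the first step is simply to invoke the lemma with $(\eta,r)$ replaced by $(\eta_t,r_t)$, yielding
\begin{equation}
\mathbb{E}[L(w_{t+1})] \le \mathbb{E}[L(w_t)] - \frac{\eta_t}{4}\mathbb{E}\big[\|\nabla L(w_t)\|^2\big] + \frac{2\eta_t \beta^2 r_t^2}{b}\sigma_{\Omega}^2 + \frac{\eta_t^2 \beta}{b}\sigma_{\Omega}^2.
\end{equation}

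Next I would rearrange this to isolate the gradient-norm term and telescope from $t=0$ to $T-1$. Summing the left-hand side telescopes to $\mathbb{E}[L(w_T)] - L(w_0)$, and using the lower bound $L(w_T) \ge L_{\inf}$ gives
\begin{equation}
\frac{1}{4}\sum_{t=0}^{T-1} \eta_t \,\mathbb{E}\big[\|\nabla L(w_t)\|^2\big] \le L(w_0) - L_{\inf} + \frac{2\beta^2 \sigma_{\Omega}^2}{b}\sum_{t=0}^{T-1} \eta_t r_t^2 + \frac{\beta \sigma_{\Omega}^2}{b}\sum_{t=0}^{T-1} \eta_t^2.
\end{equation}
By the two summability assumptions $\sum \eta_t r_t^2 < \infty$ and $\sum \eta_t^2 < \infty$, the right-hand side remains bounded uniformly in $T$. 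Letting $T \to \infty$ and using monotone convergence on the nonnegative partial sums on the left produces the first claim, $\sum_{t=0}^\infty \eta_t \,\mathbb{E}[\|\nabla L(w_t)\|^2] < \infty$.

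For the $\liminf$ statement, I would argue by contradiction. If $\liminf_{t\to\infty}\mathbb{E}[\|\nabla L(w_t)\|^2] = c > 0$, then there is some index $T_0$ beyond which $\mathbb{E}[\|\nabla L(w_t)\|^2] \ge c/2$. Combined with $\sum_{t \ge T_0} \eta_t = \infty$ (which follows from the divergent series hypothesis together with the fact that the initial segment is finite), this forces $\sum_t \eta_t \mathbb{E}[\|\nabla L(w_t)\|^2] = \infty$, contradicting the first part.

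The main obstacle I anticipate is not the algebra, which is standard stochastic descent bookkeeping, but making sure the lemma's constants are genuinely uniform in $t$: the variance bound $\sigma_{\Omega}^2/b$ must hold at every iteration, and the step-size and radius restrictions must be satisfied at every iteration. Both are explicitly built into the hypotheses of the corollary, so the argument goes through cleanly once one checks that the lemma's proof never used anything specific to the choice of $\eta$ and $r$ beyond these two inequalities. The rest is a routine telescoping and summability argument, essentially mirroring the classical Robbins--Monro style convergence proof for SGD.
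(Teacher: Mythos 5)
Your proposal is correct and follows essentially the same route as the paper's own proof: apply the one-step descent lemma with time-varying $(\eta_t, r_t)$, telescope, invoke the lower bound $L_{\inf}$ and the summability hypotheses, and deduce the $\liminf$ claim from $\sum_t \eta_t = \infty$. The only cosmetic difference is that you spell out the contradiction argument for the $\liminf$ step, which the paper leaves implicit.
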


This corollary establishes that ZSharp retains the convergence guarantees of standard SAM: under the same smoothness and bounded-variance assumptions, Z-score filtering in the ascent step still ensures convergence to stationary points. In other words, applying Z-score gradient filtering does not hinder the convergence behavior of SAM.

\section{Experimental Results}

To evaluate the effectiveness of ZSharp, we compare it with the standard SAM~\cite{foret2021sharpness} and its variants, ASAM~\cite{kwon2021asam} and Friendly-SAM~\cite{kim2022friendly}, as well as the baseline optimizer. \\

\noindent \textbf{Experimental Settings.} 
We evaluate ZSharp on CIFAR-10/100~\cite{krizhevsky2009learning} and Tiny-ImageNet~\cite{tinyimagenet} using ResNet-56/110~\cite{he2016deep}, VGG16\_BN~\cite{simonyan2015very}, and ViT models~\cite{dosovitskiy2020image}. 
All models are trained for 200 epochs with batch size 256 using AdamW~\cite{kingma2015adam, adamw} (lr=0.001, weight decay $5\times10^{-5}$) and step decay (0.75 every 10 epochs). 
For SAM~\cite{foret2021sharpness}, ASAM~\cite{kwon2021asam}, Friendly-SAM~\cite{kim2022friendly}, and ZSharp, we set $\rho=0.05$ following prior work. 
ZSharp applies Z-score filtering ($Q_p=0.95$) in the ascent step, keeping the top 5\% of gradient components. All experiments run on a single RTX 4090 GPU and results are averaged over 3 seeds. We used the publicly available implementations of ViT training~\cite{vit-cifar}, ASAM~\cite{kwon2021asam}, and FSAM~\cite{kim2022friendly} from open github repositories. \\

\noindent \textbf{Results.}
Table~\ref{table:exp_all} presents the Top-1 test accuracy and train loss across five architectures (ResNet-56, ResNet-110, VGG-16/BN, ViT-7/8/8-384, and ViT-7/8/12-768) on CIFAR-10, CIFAR-100, and Tiny-ImageNet. 
Overall, ZSharp consistently achieves the highest or comparable test accuracy among all methods, with gains observed across both convolution-based and transformer-based architectures.

\begin{table*}[h]
\centering
\tiny
\begin{adjustbox}{width=1\textwidth}
\begin{tabular}{llllllll} 
\toprule
& & \multicolumn{2}{c}{\textbf{CIFAR-10~\cite{krizhevsky2009learning}}} & \multicolumn{2}{c}{\textbf{CIFAR-100~\cite{krizhevsky2009learning}}} & \multicolumn{2}{c}{\textbf{Tiny-ImageNet~\cite{tinyimagenet}}} \\
Network & Method & Top-1 Test Acc. & Train Loss. & Top-1 Test Acc. & Train Loss. & Top-1 Test Acc. & Train Loss. \\
\midrule
\multirow{5}{*}{ResNet-56~\cite{he2016deep}}  & AdamW (Baseline)~\cite{adamw}                   & 0.9108 ± 0.0045 & 0.0057 ± 0.0013               & 0.6420 ± 0.0031 & 0.0452 ± 0.0629              & 0.4747 ± 0.0031 & 0.0121 ± 0.0105  \\
                                        & SAM~\cite{foret2021sharpness}                         & 0.9160 ± 0.0021 & 0.0221 ± 0.0051               & 0.6527 ± 0.0025 & 0.1097 ± 0.0584              & 0.4938 ± 0.0026 & 0.0453 ± 0.0127  \\
                                        & ASAM~\cite{kwon2021asam}                              & 0.9228 ± 0.0034 & 0.0366 ± 0.0093               & 0.6646 ± 0.0017 & 0.1952 ± 0.0419              & 0.5072 ± 0.0012 & 0.0564 ± 0.0091  \\
                                        & Friendly-SAM~\cite{kim2022friendly}                   & 0.9179 ± 0.0025 & 0.0219 ± 0.0076               & 0.6549 ± 0.0024 & 0.1051 ± 0.0417              & 0.4948 ± 0.0023 & 0.0444 ± 0.0102  \\
                                        & ZSharp (Ours)                                         & \textbf{0.9264 ± 0.0032} & 0.0630 ± 0.0064      & \textbf{0.6679 ± 0.0015} & 0.2510 ± 0.0438     & \textbf{0.5073 ± 0.0014} & 0.0828 ± 0.0129  \\
\midrule
\multirow{5}{*}{ResNet-110~\cite{he2016deep}}  & AdamW (Baseline)~\cite{adamw}          & 0.9140 ± 0.0031 & 0.0056 ± 0.0023                    & 0.6650 ± 0.0025 & 0.0149 ± 0.0059              & 0.4878 ± 0.0045 & 0.0556 ± 0.0114  \\
                            & SAM~\cite{foret2021sharpness}                             & 0.9233 ± 0.0025 & 0.0188 ± 0.0037                    & 0.6815 ± 0.0019 & 0.0531 ± 0.0121              & 0.5005 ± 0.0045 & 0.1417 ± 0.0216  \\
                            & ASAM~\cite{kwon2021asam}                                  & 0.9261 ± 0.0023 & 0.0288 ± 0.0056                    & 0.6796 ± 0.0036 & 0.0915 ± 0.0123              & 0.5105 ± 0.0045 & 0.2894 ± 0.0241  \\
                            & Friendly-SAM~\cite{kim2022friendly}                       & 0.9193 ± 0.0013 & 0.0190 ± 0.0036                    & 0.6762 ± 0.0021 & 0.0524 ± 0.0113              & 0.5027 ± 0.0045 & 0.1402 ± 0.0091  \\
                            & ZSharp (Ours)                                             & \textbf{0.9293 ± 0.0017} & 0.0618 ± 0.0097           & \textbf{0.6844 ± 0.0023} & 0.1656 ± 0.0213     & \textbf{0.5207 ± 0.0045} & 0.4137 ± 0.0311  \\
\midrule
\multirow{5}{*}{VGG-16/BN~\cite{simonyan2015very}}  & AdamW (Baseline)~\cite{adamw}             & 0.9247 ± 0.0013 & 0.0058 ± 0.0031               & 0.6999 ± 0.0102 & 0.0092 ± 0.0051              & 0.5507 ± 0.0093 & 0.0043 ± 0.0071   \\
                            & SAM~\cite{foret2021sharpness}                                     & 0.9337 ± 0.0018 & 0.0171 ± 0.0093               & 0.7092 ± 0.0093 & 0.0139 ± 0.0073              & 0.5587 ± 0.0103 & 0.0363 ± 0.0183   \\
                            & ASAM~\cite{kwon2021asam}                                          & \textbf{0.9355 ± 0.0012} & 0.0237 ± 0.0047      & 0.7170 ± 0.0121 & 0.0375 ± 0.0118              & 0.5647 ± 0.0191 & 0.0644 ± 0.0237   \\
                            & Friendly-SAM~\cite{kim2022friendly}                               & 0.9290 ± 0.0017 & 0.0163 ± 0.0093               & 0.7099 ± 0.0083 & 0.0495 ± 0.0125              & 0.5544 ± 0.0204 & 0.0349 ± 0.0153   \\
                            & ZSharp (Ours)                                                     & 0.9327 ± 0.0020 & 0.0351 ± 0.0144               & \textbf{0.7207 ± 0.0071} & 0.0375 ± 0.0137     & \textbf{0.5673 ± 0.0231} & 0.1248 ± 0.0351   \\
\midrule
\multirow{5}{*}{ViT-7/8/8-384~\cite{dosovitskiy2020image}}  & AdamW (Baseline)~\cite{adamw}     & 0.8398 ± 0.0028 & 0.0087 ± 0.0092               & 0.5479 ± 0.0011 & 0.0042 ± 0.0031               & 0.2843 ± 0.0012 & 0.0056 ± 0.0014  \\
                            & SAM~\cite{foret2021sharpness}                                     & 0.8432 ± 0.0032 & 0.0273 ± 0.0101               & 0.5557 ± 0.0013 & 0.0255 ± 0.0141               & 0.2897 ± 0.0009 & 0.0363 ± 0.0098  \\
                            & ASAM~\cite{kwon2021asam}                                          & 0.8302 ± 0.0034 & 0.0367 ± 0.0138               & 0.5566 ± 0.0031 & 0.0349 ± 0.0193               & 0.2522 ± 0.0032 & 0.0644 ± 0.0137  \\
                            & Friendly-SAM~\cite{kim2022friendly}                               & 0.8476 ± 0.0044 & 0.0273 ± 0.0093               & 0.5608 ± 0.0023 & 0.0228 ± 0.0138               & 0.3000 ± 0.0012 & 0.0349 ± 0.0083  \\
                            & ZSharp (Ours)                                                     & \textbf{0.8543 ± 0.0029} & 0.0647 ± 0.0216      & \textbf{0.5748 ± 0.0051} & 0.0730 ± 0.0212      & \textbf{0.3057 ± 0.0021} & 0.1248 ± 0.0413  \\

\midrule
\multirow{5}{*}{ViT-7/8/12-768~\cite{dosovitskiy2020image}}  & AdamW (Baseline)~\cite{adamw}      & 0.8438 ± 0.0021 & 0.0087 ± 0.0031               & 0.5615 ± 0.0013 & 0.0040 ± 0.0045               & 0.2991 ± 0.0010 & 0.0065 ± 0.0032  \\
                            & SAM~\cite{foret2021sharpness}                                         & 0.8486 ± 0.0018 & 0.0293 ± 0.0098               & 0.5691 ± 0.0014 & 0.0234 ± 0.0076               & 0.3014 ± 0.0015 & 0.0297 ± 0.0098  \\
                            & ASAM~\cite{kwon2021asam}                                              & 0.8395 ± 0.0020 & 0.0371 ± 0.0101               & 0.5649 ± 0.0027 & 0.0347 ± 0.0116               & 0.3023 ± 0.0008 & 0.0512 ± 0.0161  \\
                            & Friendly-SAM~\cite{kim2022friendly}                                   & 0.8525 ± 0.0021 & 0.0283 ± 0.0084               & 0.5655 ± 0.0021 & 0.0246 ± 0.0122               & 0.3034 ± 0.0013 & 0.0441 ± 0.0141  \\
                            & ZSharp (Ours)                                                         & \textbf{0.8586 ± 0.0023} & 0.0635 ± 0.0196      & \textbf{0.5777 ± 0.0031} & 0.0709 ± 0.0178      & \textbf{0.3104 ± 0.0019} & 0.1341 ± 0.0211  \\

\bottomrule
\end{tabular}
\end{adjustbox}
\caption{Top-1 Test Accuracy and Train Loss for ResNet-56~\cite{he2016deep}, ResNet-110~\cite{he2016deep}, VGG-16/BN~\cite{simonyan2015very}, ViT-7/8/8-384~\cite{dosovitskiy2020image}, and ViT-7/8/12-768~\cite{dosovitskiy2020image} on CIFAR-10~\cite{krizhevsky2009learning}, CIFAR-100~\cite{krizhevsky2009learning}, and Tiny-ImageNet datasets~\cite{tinyimagenet} across different SAM variants such as AdamW (Baseline)~\cite{adamw}, SAM~\cite{foret2021sharpness}, Friendly-SAM~\cite{kim2022friendly}, ASAM~\cite{kwon2021asam}, and ZSharp (Ours). For ViT models, ViT-7/8/8-384 and ViT-7/8/12-768 denote Vision Transformers with 7 layers, 8 attention heads, patch sizes of 8, and MLP dimensions of 384 and 768, respectively.}
\label{table:exp_all}
\end{table*}
\vspace{-0.8cm}
\begin{table*}[h]
\centering
\begin{adjustbox}{width=1\textwidth}
\begin{tabular}{lllll|lllll} 
\toprule
Network & Method & \(Q_p\) & Top-1 Test Acc. & Train Loss. & Network & Method & \(Q_p\) & Top-1 Test Acc. & Train Loss. \\
\midrule
\multirow{7}{*}{ResNet-56~\cite{he2016deep}}  & AdamW~\cite{adamw}               & N/A   & 0.9108 ± 0.0045 & 0.0057 ± 0.0013            & \multirow{7}{*}{ViT-7/8/8-384~\cite{dosovitskiy2020image}}  & AdamW~\cite{adamw}     & N/A  & 0.8398 ± 0.0028 & 0.0087 ± 0.0092\\
& SAM~\cite{foret2021sharpness}              & N/A   & 0.9160 ± 0.0021 & 0.0221 ± 0.0051            && SAM~\cite{foret2021sharpness}   & N/A   & 0.8432 ± 0.0032 & 0.0273 ± 0.0101\\
& ZSharp (Ours)                              & 0.95  & \textbf{0.9264 ± 0.0032} & 0.0630 ± 0.0064   && ZSharp (Ours) & 0.95  & \textbf{0.8543 ± 0.0029} & 0.0647 ± 0.0216\\
& ZSharp (Ours)                              & 0.90  & 0.9212 ± 0.0015 & 0.0710 ± 0.0061            && ZSharp (Ours) & 0.90  & 0.8482 ± 0.0031 & 0.0748 ± 0.0081\\
& ZSharp (Ours)                              & 0.85  & 0.9189 ± 0.0023 & 0.0679 ± 0.0067            && ZSharp (Ours) & 0.85  & 0.8424 ± 0.0043 & 0.0825 ± 0.0086\\
& ZSharp (Ours)                              & 0.80  & 0.9153 ± 0.0027 & 0.0731 ± 0.0053            && ZSharp (Ours) & 0.80  & 0.8421 ± 0.0038 & 0.0863 ± 0.0119\\
& ZSharp (Ours)                              & 0.75  & 0.9132 ± 0.0017 & 0.0789 ± 0.0079            && ZSharp (Ours) & 0.75  & 0.8378 ± 0.0027 & 0.0999 ± 0.0102\\
\bottomrule
\end{tabular}
\end{adjustbox}
\caption{Training hyperparameters and results for all experiments, with settings identical to those in the Experimental Settings section except for varying \(Q_p\) values.}
\label{table:training_settings}
\end{table*}

\begin{figure}
\centering
\includegraphics[width=1\columnwidth]{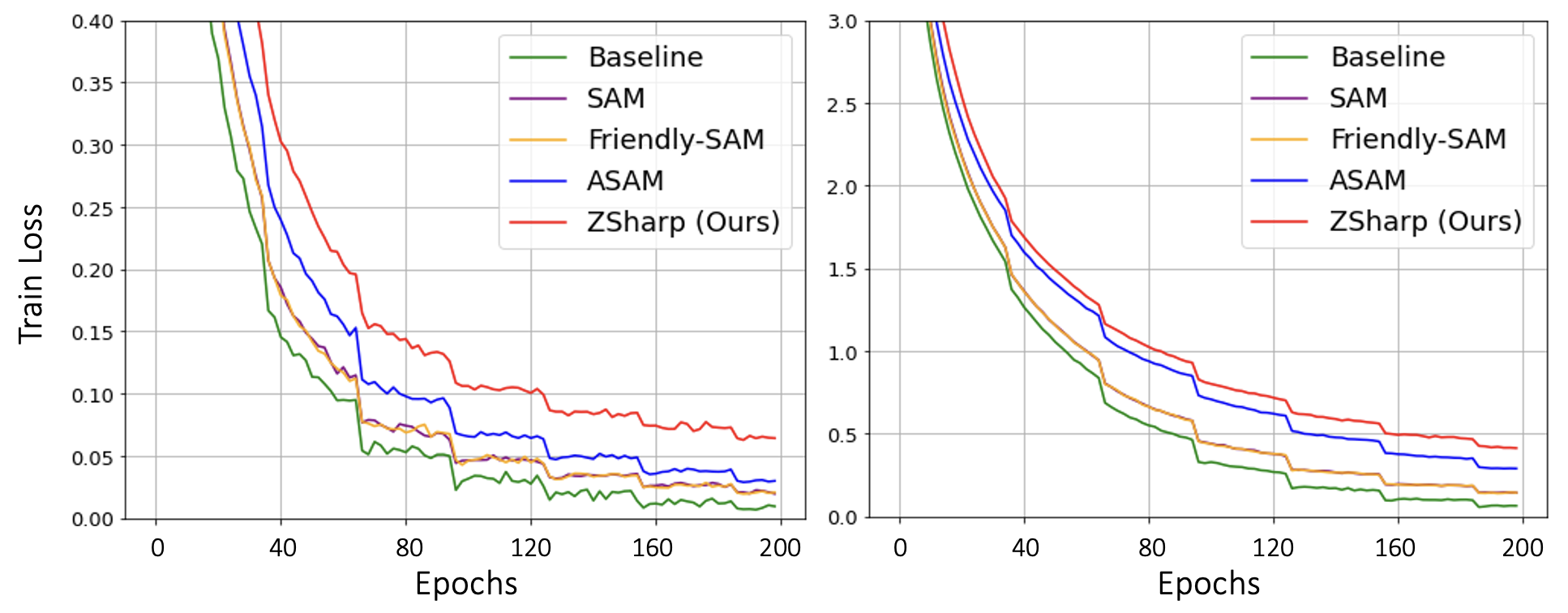}
\vspace{-12pt}
\caption{Train Loss comparison on CIFAR-10 for ResNet-56 (left) and ResNet-110 (right) across different SAM variants: Baseline, SAM, Friendly-SAM, ASAM, and ZSharp (Ours).}
\label{fig:trainloss}
\end{figure}

\noindent \textbf{Hyperparameter Tuning}
We analyze the percentile threshold \(Q_p\) in ZSharp, which retains the top \((1-Q_p)\%\) of gradients after Z-score filtering. Higher values (e.g., \(0.95\)) keep fewer but more significant components, while \(Q_p = 0.0\) recovers SAM. Table~\ref{table:training_settings} shows that \(Q_p = 0.95\) yields the best accuracy on both ResNet-56 and ViT-7/8/8-384, with performance gradually approaching SAM as \(Q_p\) decreases. We therefore use \(Q_p = 0.95\) for all subsequent experiments. \\

\noindent \textbf{Generalization Effect.} 
Figure~\ref{fig:trainloss} shows that ZSharp has a higher train loss but better test accuracy than other methods. Similar patterns have been observed in prior work~\cite{zhang2017understanding, bartlett2020benign, neyshabur2017exploring, keskar2017large}, where models with slightly higher training loss can generalize better when they converge to flatter or wider minima. 
This suggests that ZSharp’s selective focus on high-magnitude gradient components not only reduces overfitting but also helps the model find solutions with improved generalization performance on unseen data. \\

\section{Conclusion}
We proposed ZSharp, a sharpness-aware optimization method that applies  z-score gradient filtering to the ascent step of SAM, focusing updates on statistically significant gradient components. ZSharp preserves SAM’s convergence guarantees and consistently improves test accuracy over SAM and its variants across CIFAR-10, CIFAR-100, and Tiny-ImageNet on diverse architectures. With only one additional hyperparameter and no architectural changes, ZSharp offers an effective way to enhance generalization in deep
neural network training.

\section{Acknowledgement}
This research was supported by Brian Impact Foundation, a non-profit organization dedicated to the advancement of science and technology for all.

\bibliography{sample}
\newpage
\appendix
\renewcommand{\thetheorem}{\arabic{theorem}}
\renewcommand{\thelemma}{\arabic{lemma}}
\renewcommand{\theassumption}{\arabic{assumption}}
\setcounter{theorem}{0}
\setcounter{assumption}{0}

\begin{center}
{\LARGE \textbf{Appendix}}\\[1em]
\end{center}

\section{Overview}
\label{overview}
Figure~\ref{fig:overview} illustrates the overall process of ZSharp, highlighting how Z-score gradient filtering is integrated into the Sharpness-Aware Minimization (SAM) framework. 

\begin{figure*}[h]
\centering
\includegraphics[width=1\columnwidth]{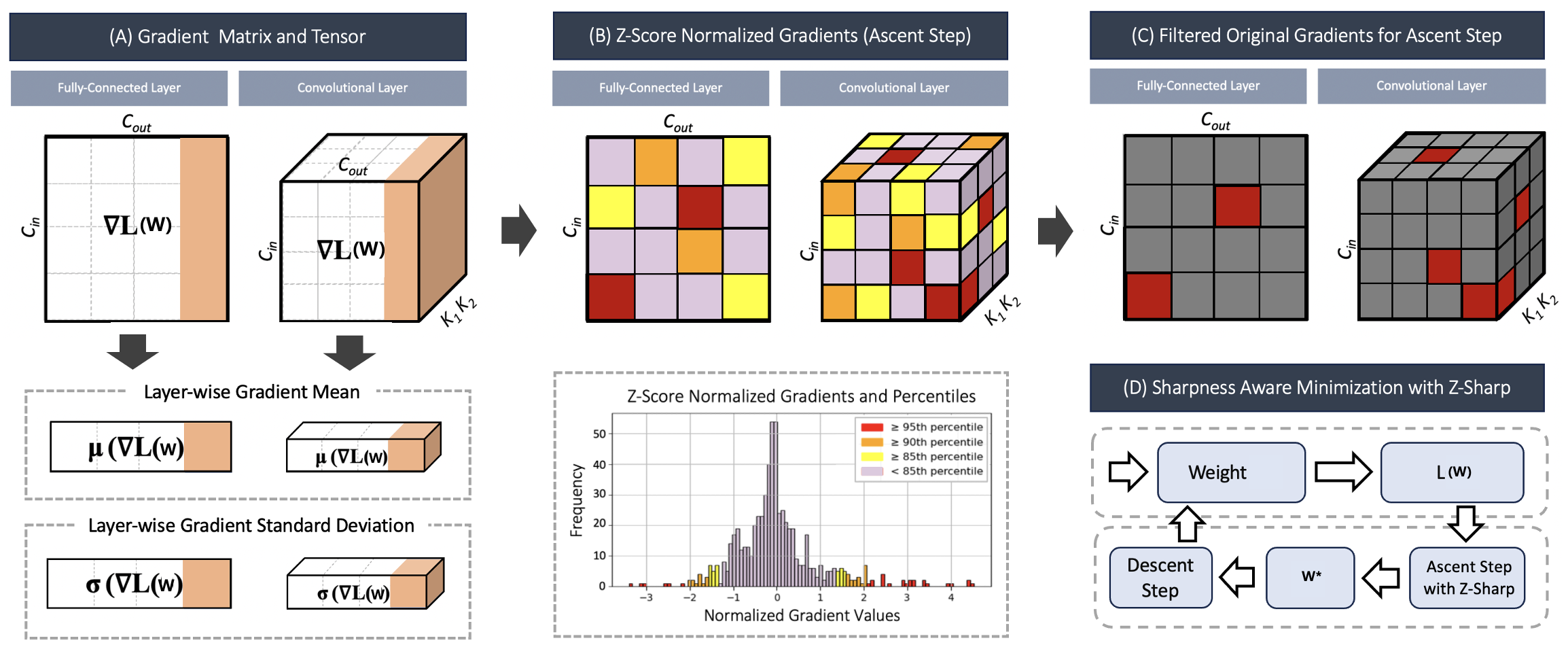}
\caption{
Overview of ZSharp: Z-Score Filtered Sharpness-Aware Minimization.
(A) Gradients from fully-connected and convolutional layers are used to compute layer-wise statistics. 
(B) Z-score normalization is applied to standardize gradients, followed by percentile-based filtering to select statistically significant components. 
(C) A binary mask retains only the top Z-score entries (e.g., top 5\%), filtering the gradient for the ascent step. 
(D) The filtered gradient is then used in the SAM ascent phase to refine the perturbation direction, enhancing generalization by focusing updates on curvature-sensitive directions.
}
\label{fig:overview}
\end{figure*}

\noindent\textbf{(A) Gradient Matrix and Tensor.} For each fully-connected and convolutional layer, we obtain the gradient tensor $\nabla L(w)$ during the ascent step of SAM. The layer-wise gradient mean $\mu(\nabla L(w))$ and standard deviation $\sigma(\nabla L(w))$ are computed to capture the statistical distribution of gradient values. \\

\noindent\textbf{(B-C) Z-Score Normalization and Gradient Filtering} Layer-wise Z-score normalization is applied to standardize the gradient values, producing normalized gradients $\Omega(\nabla L(w))$. A percentile-based ranking is then computed, where components are categorized (e.g., $\geq 95$th, $\geq 90$th, $\geq 85$th percentile, or below). A binary mask is generated to retain only the top $(1 - Q_p)\%$ of components with the largest absolute Z-scores. This mask is applied to the \emph{original} gradient (not the normalized one), resulting in a filtered gradient $\nabla L(w)_{\mathrm{\Omega}}$ that emphasizes statistically significant components. \\

\noindent\textbf{(D) Integration into SAM.} The filtered gradient is used in place of the original gradient in SAM’s ascent step to compute the perturbation $\epsilon$. The perturbed parameters $w^*$ are then used in the descent step with the base optimizer.\\

\section{Related Works}
\label{related}
Improving generalization in deep neural networks (DNNs)~\cite{goodfellow2016deep, dl1, dl2} has motivated many optimization strategies. Among these, normalization-based approaches and sharpness-aware optimization have been widely explored.  

Normalization techniques are effective in enhancing generalization performance. Batch Normalization~\cite{ioffe2015batch} and Layer Normalization~\cite{ba2016layer} act on activations, alleviating gradient vanishing while improving generalization. At the gradient level, gradient clipping~\cite{pascanu2013difficulty} limits gradient magnitude, and gradient centralization~\cite{yong2020gradient} subtracts mean values to improve convergence. Stochastic Gradient Sampling, as in StochGradAdam~\cite{stochgrad}, selects subsets of gradients during training, leading to stronger generalization particularly in ResNet-based CNNs. More recently, ZNorm~\cite{yun2024znorm} applies layer-wise Z-score normalization to gradients, providing consistent scaling and yielding enhanced generalization on benchmarks such as CIFAR-10 and in medical imaging tasks.

Beyond normalization, sharpness-aware optimization has emerged as a key framework, aiming to locate flatter minima that empirically correlate with stronger generalization. Sharpness-Aware Minimization (SAM)~\cite{foret2021sharpness,sam1,sam2,sam_proof} perturbs parameters in the gradient direction and minimizes the maximum loss within a local $\ell_2$ neighborhood. This approach improves generalization~\cite{chen2022when} compared to standard optimizers such as SGD~\cite{sgd} and Adam~\cite{adamw}. However, SAM constructs perturbations using the full gradient vector, including noisy or weak components, which can reduce precision in identifying sharpness-sensitive directions~\cite{zhuang2022gsam, ssam}.

Several extensions have been proposed. Adaptive SAM (ASAM)~\cite{kwon2021asam} rescales perturbations by curvature, improving robustness to parameter scaling. Friendly-SAM~\cite{kim2022friendly} approximates the sharpness objective to reduce computational cost, though sometimes at the expense of accuracy in architectures such as Vision Transformers (ViTs)~\cite{dosovitskiy2020image}. GSAM~\cite{zhuang2022gsam} aligns gradients to stabilize updates but requires additional hyperparameters.

ZSharp builds on SAM~\cite{foret2021sharpness} and ZNorm~\cite{yun2024znorm} by introducing statistical filtering into the perturbation step. During the ascent phase, gradients are first standardized within each layer using Z-score normalization, and these standardized values are used to compute a binary mask that identifies components above a given percentile threshold. This mask is then applied to the original gradients, retaining only the top $(1-Q_p)\%$ of components with the largest deviations from the mean. In this way, ZSharp reduces the influence of noise and small gradients in the ascent step, yielding sparse but targeted perturbations that better capture sharpness-related directions.

ZSharp introduces only one additional hyperparameter, the percentile threshold, without architectural or training modifications. It remains compatible with SAM implementations and base optimizers. Experiments on CIFAR-10~\cite{krizhevsky2009learning}, CIFAR-100~\cite{krizhevsky2009learning}, and Tiny-ImageNet~\cite{tinyimagenet} show improved generalization across ResNet~\cite{he2016deep}, VGG~\cite{simonyan2015very}, and ViT~\cite{dosovitskiy2020image}, suggesting that statistically guided filtering is an effective strategy for enhancing sharpness aware optimization in high dimensional or noisy gradient regimes.

\newpage
\section{Theoretical Analysis: Proof of Convergence}
\label{convergence}
We first present a couple of useful lemmas here. 
Our analysis borrows the proof structure used in~\cite{sam_proof, sam2}, 
but we adapt it to the ZSharp setting, where the ascent step uses 
the Z-score filtered gradient 
\(\nabla L(\cdot)_{\Omega}\) defined in Section~\ref{zsharp}, 
while the descent step still uses the original gradient \(\nabla L(\cdot)\).

\begin{assumption}[$\beta$-smoothness]
\label{append:smoothness}
The loss function \(L:\mathbb{R}^d \to \mathbb{R}\) is \(\beta\)-smooth, meaning its gradient is \(\beta\)-Lipschitz continuous, $\|\nabla L(u) - \nabla L(v)\| \le \beta \|u - v\|, \quad \forall\, u,v \in \mathbb{R}^d.$ Equivalently, for any \(u,v\), $L(u) \le L(v) + \langle \nabla L(v),\, u - v \rangle + \tfrac{\beta}{2}\|u - v\|^2.$
\end{assumption}

\begin{assumption}[Unbiased stochastic gradient]
\label{append:unbiased}
At each iteration \(t\), the stochastic gradient provides an unbiased estimate of the true gradient: $\mathbb{E}[\nabla L_t(w_t)] = \nabla L(w_t).$
\end{assumption}

\begin{lemma}
Given a $\beta$-smooth loss function $L(x)$, 
the following bound holds:
\begin{equation}
\langle \nabla L(u) - \nabla L(v), u - v \rangle 
\ge -\beta \|u - v\|^2.
\end{equation}
\end{lemma}
\begin{proof}
By $\beta$-smoothness of $L$, we have
\begin{align}
\|\nabla L(u) - \nabla L(v)\| 
&\le \beta \|u - v\|, 
\quad \forall \, u, v \in \mathbb{R}^d. \label{eq:smooth}
\end{align}
Multiplying both sides of \eqref{eq:smooth} by $\|v - u\|$ and using $\|u-v\| = \|v-u\|$, we get
\begin{align}
\|\nabla L(u) - \nabla L(v)\| \cdot \|v - u\| \le \beta \|u - v\| \cdot \|v - u\| \nonumber= \beta \|u - v\|^2. \label{eq:mult}
\end{align}
By the Cauchy–Schwarz inequality,
\begin{align}
\langle \nabla L(u) - \nabla L(v), v - u \rangle \le \|\nabla L(u) - \nabla L(v)\| \cdot \|v - u\| \nonumber\le \beta \|u - v\|^2.
\end{align}
Finally, multiplying above equation by $-1$ yields
\begin{align}
\langle \nabla L(u) - \nabla L(v), u - v \rangle
&\ge -\beta \|u - v\|^2.
\end{align}
\end{proof}

\begin{lemma}
\label{lemma:A2}
Let \(L\) be a \(\beta\)-smooth loss function. At iteration \(t\), let 
\(\nabla L_t(w_t)_{\Omega}\) be an unbiased stochastic estimator of 
\(\nabla L(w_t)_{\Omega}\) with bounded variance $\mathbb{E}\!\left[\left\|\nabla L_t(w_t)_{\Omega} - \nabla L(w_t)_{\Omega}\right\|^2\right] \le \frac{\sigma_{\Omega}^2}{b}$ Then, for any \(r>0\),
\begin{align}
\mathbb{E}[
\left\langle
\nabla  L(w_t + r\,\nabla L_t(w_t)_{\Omega}),\nabla L(w_t)\right\rangle] \ge \frac{1}{2}\,\mathbb{E}\!\left[\|\nabla L(w_t)\|^2\right] - \frac{\beta^2 r^2}{2}\,\mathbb{E}\!\left[\|\nabla L(w_t)_{\Omega}\|^2\right]
- \frac{\beta^2 r^2\sigma_{\Omega}^2}{2b}.
\end{align}
\end{lemma}

\begin{proof}
Define
\begin{align}
\Delta_t 
= \nabla L\!\left(w_t + r\,\nabla L_t(w_t)_{\Omega}\right)
 - \nabla L(w_t).
\end{align}
Then
\begin{align}
\left\langle 
\nabla L\!\left(w_t + r\,\nabla L_t(w_t)_{\Omega}\right),\;
\nabla L(w_t)
\right\rangle
= 
\|\nabla L(w_t)\|^2 + \langle \Delta_t, \nabla L(w_t) \rangle.
\end{align}
By Cauchy--Schwarz and Young’s inequality,
\begin{align}
\langle \Delta_t, \nabla L(w_t) \rangle
\ge 
-\frac{1}{2}\|\Delta_t\|^2 
-\frac{1}{2}\|\nabla L(w_t)\|^2.
\end{align}
Combining the two displays and taking expectations,
\begin{align}
\mathbb{E}\!\left[
\left\langle 
\nabla L\!\left(w_t + r\,\nabla L_t(w_t)_{\Omega}\right),\;
\nabla L(w_t)
\right\rangle
\right]
\ge
\frac{1}{2}\,\mathbb{E}\!\left[\|\nabla L(w_t)\|^2\right]
- \frac{1}{2}\,\mathbb{E}\!\left[\|\Delta_t\|^2\right].
\end{align}

By \(\beta\)-smoothness of \(L\),
\begin{align}
\|\Delta_t\|
&= \big\|\nabla L\!\left(w_t + r\,\nabla L_t(w_t)_{\Omega}\right)
 - \nabla L(w_t)\big\| \nonumber\\
&\le 
\beta r\,\|\nabla L_t(w_t)_{\Omega}\|.
\end{align}
Squaring and taking expectations,
\begin{align}
\mathbb{E}\!\left[\|\Delta_t\|^2\right]
\le 
\beta^2 r^2\,\mathbb{E}\!\left[\|\nabla L_t(w_t)_{\Omega}\|^2\right].
\end{align}
By variance decomposition,
\begin{align}
\mathbb{E}\!\left[\|\nabla L_t(w_t)_{\Omega}\|^2\right]
=
\mathbb{E}\!\left[\|\nabla L(w_t)_{\Omega}\|^2\right] 
+ \frac{\sigma_{\Omega}^2}{b}.
\end{align}
Therefore,
\begin{align}
\mathbb{E}\!\left[\|\Delta_t\|^2\right]
\le 
\beta^2 r^2\,\mathbb{E}\!\left[\|\nabla L(w_t)_{\Omega}\|^2\right]
+ \frac{\beta^2 r^2\sigma_{\Omega}^2}{b}.
\end{align}
Substituting this bound into the earlier inequality gives
\begin{align}
\mathbb{E}\!\left[
\left\langle 
\nabla L\!\left(w_t + r\,\nabla L_t(w_t)_{\Omega}\right),\;
\nabla L(w_t)
\right\rangle
\right]
\ge 
\frac{1}{2}\,\mathbb{E}\!\left[\|\nabla L(w_t)\|^2\right]
- \frac{\beta^2 r^2}{2}\,\mathbb{E}\!\left[\|\nabla L(w_t)_{\Omega}\|^2\right]
- \frac{\beta^2 r^2\sigma_{\Omega}^2}{2b},
\end{align}
which proves the claim.
\end{proof}

\begin{lemma}
Assume that $L$ is $\beta$-smooth and the filtered stochastic gradient 
$\nabla L_t(w_t)_{\Omega}$ satisfies the variance bound
$\mathbb{E}\!\left[\|\nabla L_t(w_t)_{\Omega} - \nabla L(w_t)_{\Omega}\|^2\right] 
\le \frac{\sigma_{\Omega}^2}{b}.$ Consider the ZSharp-SAM updates $w_{t+1/2} = w_t + r \,\nabla L_t(w_t)_{\Omega},$ and $w_{t+1}   = w_{t+1/2} - \eta \,\nabla L_t(w_{t+1/2}).$ If the step size satisfies $\eta \le \tfrac{1}{4\beta}$ and the ascent radius satisfies 
$\beta^2 r^2 \le \tfrac{1}{4}$, then
\begin{align}
\mathbb{E}[L(w_{t+1})]
\le 
\mathbb{E}[L(w_t)]
- \frac{\eta}{4}\,\mathbb{E}\big[\|\nabla L(w_t)\|^2\big]
+ \frac{2\eta\beta^2 r^2}{b}\,\sigma_{\Omega}^2
+ \frac{\eta^2 \beta}{b}\,\sigma_{\Omega}^2.
\end{align}
\end{lemma}

\begin{proof}
Define the ascent-step parameter
\begin{align}
w_{t+1/2} = w_t + r\,\nabla L_t(w_t)_{\Omega}.
\end{align}
By $\beta$-smoothness,
\begin{align}
L(w_{t+1})
\le 
L(w_t)
- \eta\,\langle \nabla L_t(w_{t+1/2}), \nabla L(w_t)\rangle
+ \frac{\eta^2\beta}{2}\|\nabla L_t(w_{t+1/2})\|^2.
\end{align}
Taking expectations and using 
$\langle p,q\rangle = \tfrac12(\|p\|^2 + \|q\|^2 - \|p-q\|^2)$
with $p = \nabla L(w_{t+1/2})$ and $q = \nabla L(w_t)$ (and inserting/subtracting the population gradient where needed), we obtain
\begin{align}
\mathbb{E}[L(w_{t+1})]
&\le 
\mathbb{E}[L(w_t)]
- \frac{\eta}{2}\Big(\mathbb{E}\|\nabla L(w_{t+1/2})\|^2 
+ \mathbb{E}\|\nabla L(w_t)\|^2 - E_1\Big)
+ \frac{\eta^2\beta}{2}E_2,
\end{align}
where
\begin{align}
E_1 &= \mathbb{E}\big\|\nabla L(w_{t+1/2}) - \nabla L(w_t)\big\|^2,\\
E_2 &= \mathbb{E}\big\|\nabla L_t(w_{t+1/2})\big\|^2.
\end{align}

\paragraph{Bounding $E_1$.}
By $\beta$-smoothness,
\begin{align}
\|\nabla L(w_{t+1/2}) - \nabla L(w_t)\|^2
&\le \beta^2 \|w_{t+1/2} - w_t\|^2
= \beta^2 r^2 \|\nabla L_t(w_t)_{\Omega}\|^2.
\end{align}
Hence
\begin{align}
E_1
&\le \beta^2 r^2\,\mathbb{E}\|\nabla L_t(w_t)_{\Omega}\|^2 \\
&= \beta^2 r^2\,\mathbb{E}\big\|\nabla L_t(w_t)_{\Omega} - \nabla L(w_t)_{\Omega} + \nabla L(w_t)_{\Omega}\big\|^2 \\
&\le 2\beta^2 r^2\,\mathbb{E}\|\nabla L_t(w_t)_{\Omega} - \nabla L(w_t)_{\Omega}\|^2
   + 2\beta^2 r^2\,\mathbb{E}\|\nabla L(w_t)_{\Omega}\|^2 \\
&\le \frac{2\beta^2 r^2}{b}\sigma_{\Omega}^2
   + 2\beta^2 r^2\,\mathbb{E}\|\nabla L(w_t)\|^2,
\end{align}
where we used $\|\nabla L(w_t)_{\Omega}\|^2 \le \|\nabla L(w_t)\|^2$ since filtering only removes coordinates.

\paragraph{Bounding $E_2$.}
We have
\begin{align}
E_2
&= \mathbb{E}\|\nabla L_t(w_{t+1/2})\|^2 \\
&\le 
2\,\mathbb{E}\|\nabla L_t(w_{t+1/2}) - \nabla L(w_{t+1/2})\|^2
+ 2\,\mathbb{E}\|\nabla L(w_{t+1/2})\|^2 \\
&\le 
\frac{2\sigma_{\Omega}^2}{b}
+ 2\,\mathbb{E}\|\nabla L(w_{t+1/2})\|^2.
\end{align}
Using smoothness again with 
$u = w_t + r\nabla L(w_t)_{\Omega}$,
\begin{align}
\|\nabla L(w_{t+1/2}) - \nabla L(u)\|
&\le \beta r\,\|\nabla L_t(w_t)_{\Omega} - \nabla L(w_t)_{\Omega}\|,
\end{align}
which implies
\begin{align}
\|\nabla L(w_{t+1/2})\|^2
&\le 2\|\nabla L(u)\|^2 
  + 2\beta^2 r^2\,\|\nabla L_t(w_t)_{\Omega} - \nabla L(w_t)_{\Omega}\|^2.
\end{align}
Taking expectations and applying the variance bound,
\begin{align}
\mathbb{E}\|\nabla L(w_{t+1/2})\|^2
&\le 2\|\nabla L(w_t + r\nabla L(w_t)_{\Omega})\|^2
  + \frac{2\beta^2 r^2}{b}\sigma_{\Omega}^2.
\end{align}
Thus
\begin{align}
E_2
&\le 
\frac{2\sigma_{\Omega}^2}{b}
+ 2\|\nabla L(w_t + r\nabla L(w_t)_{\Omega})\|^2
+ \frac{2\beta^2 r^2}{b}\sigma_{\Omega}^2.
\end{align}

\paragraph{Putting the bounds together.}
Substituting the bounds on $E_1$ and $E_2$ into the main inequality gives
\begin{align}
\mathbb{E}[L(w_{t+1})]
&\le 
\mathbb{E}[L(w_t)]
- \frac{\eta}{2}\mathbb{E}\|\nabla L(w_{t+1/2})\|^2
- \frac{\eta}{2}\mathbb{E}\|\nabla L(w_t)\|^2
+ \frac{\eta}{2}E_1
+ \frac{\eta^2\beta}{2}E_2 \\
&\le 
\mathbb{E}[L(w_t)]
- \frac{\eta}{2}\mathbb{E}\|\nabla L(w_{t+1/2})\|^2
- \eta\Big(\tfrac12 - \beta^2 r^2\Big)\mathbb{E}\|\nabla L(w_t)\|^2 \\
&\quad
+ \frac{\eta\beta^2 r^2}{b}\sigma_{\Omega}^2
+ \frac{\eta^2\beta}{b}\sigma_{\Omega}^2
+ \eta^2\beta\,\mathbb{E}\|\nabla L(w_{t+1/2})\|^2.
\end{align}
The coefficient of $\mathbb{E}\|\nabla L(w_{t+1/2})\|^2$ is
\begin{align}
-\frac{\eta}{2} + \eta^2\beta.
\end{align}
For $\eta \le \tfrac{1}{4\beta}$ this coefficient is non-positive, so we can drop this term.  
Moreover, if $\beta^2 r^2 \le \tfrac14$, then
\(\tfrac12 - \beta^2 r^2 \ge \tfrac14\), and hence
\begin{align}
\mathbb{E}[L(w_{t+1})]
&\le 
\mathbb{E}[L(w_t)]
- \frac{\eta}{4}\,\mathbb{E}\|\nabla L(w_t)\|^2
+ \frac{2\eta\beta^2 r^2}{b}\sigma_{\Omega}^2
+ \frac{\eta^2\beta}{b}\sigma_{\Omega}^2,
\end{align}
which proves the claim.
\end{proof}

\begin{theorem}
Assume that $L$ is $\beta$-smooth and that the filtered stochastic gradient
$\nabla L_t(w_t)_{\Omega}$ satisfies the variance bound
\begin{align}
\mathbb{E}\!\left[\|\nabla L_t(w_t)_{\Omega} - \nabla L(w_t)_{\Omega}\|^2\right]
\le \frac{\sigma_{\Omega}^2}{b}.
\end{align}
All expectations are taken over the mini-batch at iteration $t$.
Suppose the ZSharp-SAM updates
\begin{align}
w_{t+1/2} &= w_t + r \,\nabla L_t(w_t)_{\Omega}, \\
w_{t+1}   &= w_{t+1/2} - \eta \,\nabla L_t(w_{t+1/2})
\end{align}
use step size $\eta$ and ascent radius $r$ satisfying
$\eta \le \tfrac{1}{4\beta}$ and $\beta^2 r^2 \le \tfrac{1}{4}$.
Then ZSharp-SAM satisfies
\begin{align}
\frac{1}{T} \sum_{t=0}^{T-1} \mathbb{E} \big[ \|\nabla L(w_t)\|^2 \big]
\le \frac{4}{T \eta} \big( L(w_0) - \mathbb{E}[L(w_T)] \big)
+ \frac{8\beta^2 r^2}{b}\,\sigma_{\Omega}^2
+ \frac{4\eta \beta}{b}\,\sigma_{\Omega}^2.
\end{align}
\end{theorem}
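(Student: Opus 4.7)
The plan is to derive the theorem by summing the per-iteration descent inequality established in the preceding lemma and then rearranging. That lemma already carries all the analytical heavy-lifting: it bounds $\mathbb{E}[L(w_{t+1})]$ in terms of $\mathbb{E}[L(w_t)]$, the squared true gradient $\mathbb{E}[\|\nabla L(w_t)\|^2]$, and constant noise contributions scaling with $r$, $\eta$, $\beta$, $b$, and $\sigma_{\Omega}^2$. The step-size conditions $\eta \le \tfrac{1}{4\beta}$ and $\beta^2 r^2 \le \tfrac{1}{4}$ assumed in the theorem are exactly those required to invoke the lemma, so it applies at every iteration $t$ without modification, and no further smoothness or variance arguments are needed.

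First, I would isolate the gradient-norm term by rearranging the descent lemma into the form $\tfrac{\eta}{4}\mathbb{E}[\|\nabla L(w_t)\|^2] \le \mathbb{E}[L(w_t)] - \mathbb{E}[L(w_{t+1})] + C$, where the constant $C = \tfrac{2\eta\beta^2 r^2}{b}\sigma_{\Omega}^2 + \tfrac{\eta^2\beta}{b}\sigma_{\Omega}^2$ is independent of $t$. Next, I would sum this inequality from $t = 0$ to $T-1$. The loss differences on the right-hand side telescope to $L(w_0) - \mathbb{E}[L(w_T)]$ (using that $w_0$ is deterministic), while the constant noise terms accumulate to $T\cdot C$. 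Dividing the resulting inequality by $T\eta/4$ converts the left-hand side into the Cesàro average $\tfrac{1}{T}\sum_{t=0}^{T-1}\mathbb{E}[\|\nabla L(w_t)\|^2]$, and on the right the prefactor $\tfrac{4}{T\eta}$ acting on $T\cdot C$ simplifies to $\tfrac{4C}{\eta}$, yielding precisely the coefficients $\tfrac{8\beta^2 r^2}{b}\sigma_{\Omega}^2$ and $\tfrac{4\eta\beta}{b}\sigma_{\Omega}^2$ stated in the theorem, together with the optimization-error term $\tfrac{4}{T\eta}(L(w_0) - \mathbb{E}[L(w_T)])$.

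There is no genuine obstacle at this stage, since the real difficulty was absorbed into the descent lemma, where $\beta$-smoothness, Young's inequality, and the variance bound were combined while using the key fact that filtering only removes coordinates (so $\|\nabla L(w_t)_{\Omega}\|^2 \le \|\nabla L(w_t)\|^2$, allowing the coefficient of $\mathbb{E}\|\nabla L(w_t)\|^2$ to remain non-negative). The only mildly subtle point in the telescoping step is that the lemma is most naturally stated with expectations conditional on the history up to iteration $t$, whereas the theorem statement treats expectations unconditionally. This is resolved by applying the tower property: taking a further unconditional expectation of the per-step bound preserves the inequality and turns each $\mathbb{E}[L(w_t)]$ into a well-defined scalar, so the telescoping sum $\sum_{t=0}^{T-1}(\mathbb{E}[L(w_t)] - \mathbb{E}[L(w_{t+1})]) = L(w_0) - \mathbb{E}[L(w_T)]$ is fully justified.
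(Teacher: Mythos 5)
Your proposal is correct and follows essentially the same route as the paper's proof: invoke the one-step descent lemma at each iteration, telescope the loss differences over $t=0,\dots,T-1$, and divide by $T\eta/4$ to obtain the stated coefficients. The only cosmetic difference is that the paper averages before telescoping while you sum first, which changes nothing.
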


\begin{proof}
From Lemma~4 (ZSharp-SAM one-step descent bound), for each $t$ we have
\begin{align}
\mathbb{E}[L(w_{t+1})]
&\le \mathbb{E}[L(w_t)]
- \frac{\eta}{4}\,\mathbb{E}\big[\|\nabla L(w_t)\|^2\big]
+ \frac{2\eta \beta^2 r^2}{b}\,\sigma_{\Omega}^2
+ \frac{\eta^2 \beta}{b}\,\sigma_{\Omega}^2.
\label{eq:a5_step_new}
\end{align}
Averaging \eqref{eq:a5_step_new} over $t = 0,\dots,T-1$ yields
\begin{align}
\frac{1}{T} \sum_{t=0}^{T-1} \mathbb{E}[L(w_{t+1})]
&\le \frac{1}{T} \sum_{t=0}^{T-1} \mathbb{E}[L(w_t)]
- \frac{\eta}{4T} \sum_{t=0}^{T-1} \mathbb{E}\big[\|\nabla L(w_t)\|^2\big]
+ \frac{2\eta \beta^2 r^2}{b}\,\sigma_{\Omega}^2
+ \frac{\eta^2 \beta}{b}\,\sigma_{\Omega}^2.
\label{eq:a5_avg_new}
\end{align}
Using the telescoping identity
\begin{align}
\frac{1}{T} \sum_{t=0}^{T-1}
\big(\mathbb{E}[L(w_t)] - \mathbb{E}[L(w_{t+1})]\big)
= \frac{1}{T} \big(L(w_0) - \mathbb{E}[L(w_T)]\big),
\end{align}
inequality \eqref{eq:a5_avg_new} can be rewritten as
\begin{align}
\frac{\eta}{4T} \sum_{t=0}^{T-1} \mathbb{E}\big[\|\nabla L(w_t)\|^2\big]
&\le \frac{1}{T}\big(L(w_0) - \mathbb{E}[L(w_T)]\big)
+ \frac{2\eta \beta^2 r^2}{b}\,\sigma_{\Omega}^2
+ \frac{\eta^2 \beta}{b}\,\sigma_{\Omega}^2.
\label{eq:a5_before_div_new}
\end{align}
Dividing both sides of \eqref{eq:a5_before_div_new} by $\eta/4$ gives
\begin{align}
\frac{1}{T} \sum_{t=0}^{T-1} \mathbb{E}\big[\|\nabla L(w_t)\|^2\big]
&\le \frac{4}{T \eta} \big(L(w_0) - \mathbb{E}[L(w_T)]\big)
+ \frac{8\beta^2 r^2}{b}\,\sigma_{\Omega}^2
+ \frac{4\eta \beta}{b}\,\sigma_{\Omega}^2,
\end{align}
which is the claimed bound.
\end{proof}

\begin{corollary}[Convergence with diminishing step sizes]
Assume that $L$ is $\beta$-smooth, bounded below by $L_{\inf}$, and $\nabla L_t(w_t)_{\Omega}$ satisfies $\mathbb{E}\!\left[\|\nabla L_t(w_t)_{\Omega} - \nabla L(w_t)_{\Omega}\|^2\right]
$$\le \frac{\sigma_{\Omega}^2}{b}.$
Consider the ZSharp-SAM updates $w_{t+1/2} = w_t + r_t \,\nabla L_t(w_t)_{\Omega}, $ and $w_{t+1}   = w_{t+1/2} - \eta_t \,\nabla L_t(w_{t+1/2}),$ with step sizes $\{\eta_t\}_{t\ge 0}$ and ascent radii $\{r_t\}_{t\ge 0}$ satisfying
$\eta_t \le \frac{1}{4\beta} $$ \beta^2 r_t^2 \le \frac{1}{4} \text{for all } t,$
$\sum_{t=0}^\infty \eta_t = \infty, $$\sum_{t=0}^\infty \eta_t^2 < \infty, $$\sum_{t=0}^\infty \eta_t r_t^2 < \infty.$
Then
\begin{align}
\sum_{t=0}^\infty \eta_t\,\mathbb{E}\big[\|\nabla L(w_t)\|^2\big] < \infty,
\end{align}
and in particular
\begin{align}
\liminf_{t\to\infty} \mathbb{E}\big[\|\nabla L(w_t)\|^2\big] = 0.
\end{align}
\end{corollary}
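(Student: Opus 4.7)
The plan is to apply the one-step descent bound from Lemma~4 iteratively with time-varying step size $\eta_t$ and ascent radius $r_t$ in place of the constant $\eta,r$. The proof of Lemma~4 only uses the per-iteration inequalities $\eta \le 1/(4\beta)$ and $\beta^2 r^2 \le 1/4$ to absorb two terms (the $-\eta/2 + \eta^2\beta$ coefficient of $\mathbb{E}\|\nabla L(w_{t+1/2})\|^2$, and the $1/2 - \beta^2 r^2$ coefficient of $\mathbb{E}\|\nabla L(w_t)\|^2$); these bounds are imposed on $(\eta_t,r_t)$ for every $t$, so the one-step descent inequality extends verbatim:
\begin{equation}
\mathbb{E}[L(w_{t+1})]
\le \mathbb{E}[L(w_t)]
- \frac{\eta_t}{4}\mathbb{E}\big[\|\nabla L(w_t)\|^2\big]
+ \frac{2\eta_t\beta^2 r_t^2}{b}\sigma_{\Omega}^2
+ \frac{\eta_t^2\beta}{b}\sigma_{\Omega}^2.
\end{equation}

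Next, I would rearrange and sum this bound over $t=0,\ldots,T-1$. The loss differences telescope into $L(w_0) - \mathbb{E}[L(w_T)]$, which is bounded above by $L(w_0) - L_{\inf}$ since $L$ is bounded below. The two noise terms remain finite as $T\to\infty$ because, by hypothesis, $\sum_{t\ge 0}\eta_t r_t^2 < \infty$ and $\sum_{t\ge 0}\eta_t^2 < \infty$. Multiplying through by $4$ and passing to the limit yields
\begin{equation}
\sum_{t=0}^\infty \eta_t\,\mathbb{E}\big[\|\nabla L(w_t)\|^2\big]
\le 4\big(L(w_0) - L_{\inf}\big)
+ \frac{8\beta^2\sigma_{\Omega}^2}{b}\sum_{t=0}^\infty \eta_t r_t^2
+ \frac{4\beta\sigma_{\Omega}^2}{b}\sum_{t=0}^\infty \eta_t^2 < \infty,
\end{equation}
which is the first claim.

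For the second claim, I would argue by contradiction. Suppose $\liminf_{t\to\infty}\mathbb{E}[\|\nabla L(w_t)\|^2] = c > 0$. Then there exists $T_0$ with $\mathbb{E}[\|\nabla L(w_t)\|^2] \ge c/2$ for all $t \ge T_0$, so
\begin{equation}
\sum_{t=T_0}^\infty \eta_t\,\mathbb{E}\big[\|\nabla L(w_t)\|^2\big]
\ge \frac{c}{2}\sum_{t=T_0}^\infty \eta_t = \infty
\end{equation}
by the divergence condition $\sum_t \eta_t = \infty$, contradicting the first claim. Hence $\liminf_{t\to\infty}\mathbb{E}[\|\nabla L(w_t)\|^2] = 0$.

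The main obstacle is mostly bookkeeping: verifying that the one-step descent proof in Lemma~4 transfers cleanly when $\eta$ and $r$ depend on $t$. Since every step of that proof is a per-iteration conditional-expectation argument and the step-size constraints are assumed to hold for each $t$, no genuine difficulty arises; the rest is a standard Robbins–Monro style telescoping followed by a routine contradiction argument.
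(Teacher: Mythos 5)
Your proposal is correct and follows essentially the same route as the paper: extend the one-step descent bound of Lemma~4 to time-varying $(\eta_t, r_t)$, telescope, bound the noise sums via the summability hypotheses, and deduce the $\liminf$ claim from $\sum_t \eta_t = \infty$ (you merely spell out the contradiction argument that the paper leaves implicit).
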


\begin{proof}
By the same argument as in Lemma~4, but allowing time-varying $(\eta_t,r_t)$,
for each $t$ we obtain
\begin{align}
\mathbb{E}[L(w_{t+1})]
\le 
\mathbb{E}[L(w_t)]
- \frac{\eta_t}{4}\,\mathbb{E}\big[\|\nabla L(w_t)\|^2\big]
+ \frac{2\eta_t \beta^2 r_t^2}{b}\,\sigma_{\Omega}^2
+ \frac{\eta_t^2 \beta}{b}\,\sigma_{\Omega}^2.
\label{eq:cor_step}
\end{align}
Rearranging,
\begin{align}
\frac{\eta_t}{4}\,\mathbb{E}\big[\|\nabla L(w_t)\|^2\big]
&\le 
\mathbb{E}[L(w_t)] - \mathbb{E}[L(w_{t+1})]
+ \frac{2\eta_t \beta^2 r_t^2}{b}\,\sigma_{\Omega}^2
+ \frac{\eta_t^2 \beta}{b}\,\sigma_{\Omega}^2.
\end{align}
Summing from $t=0$ to $T-1$ gives
\begin{align}
\frac{1}{4} \sum_{t=0}^{T-1} \eta_t\,\mathbb{E}\big[\|\nabla L(w_t)\|^2\big]
&\le 
\mathbb{E}[L(w_0)] - \mathbb{E}[L(w_T)]
+ \frac{2\beta^2\sigma_{\Omega}^2}{b} \sum_{t=0}^{T-1} \eta_t r_t^2
+ \frac{\beta \sigma_{\Omega}^2}{b} \sum_{t=0}^{T-1} \eta_t^2.
\end{align}
Using $L(w_T) \ge L_{\inf}$ and letting $T\to\infty$, the right-hand side is bounded by
\begin{align}
\mathbb{E}[L(w_0)] - L_{\inf}
+ \frac{2\beta^2\sigma_{\Omega}^2}{b} \sum_{t=0}^{\infty} \eta_t r_t^2
+ \frac{\beta \sigma_{\Omega}^2}{b} \sum_{t=0}^{\infty} \eta_t^2 < \infty,
\end{align}
by the assumptions on $\{\eta_t\}$ and $\{r_t\}$. Hence
\begin{align}
\sum_{t=0}^{\infty} \eta_t\,\mathbb{E}\big[\|\nabla L(w_t)\|^2\big] < \infty.
\end{align}
Because $\sum_{t=0}^{\infty} \eta_t = \infty$, this implies
\begin{align}
\liminf_{t\to\infty} \mathbb{E}\big[\|\nabla L(w_t)\|^2\big] = 0,
\end{align}
which completes the proof.
\end{proof}

\section{Experimental Results}

\begin{figure*}[h]
\includegraphics[width=1.005\textwidth]{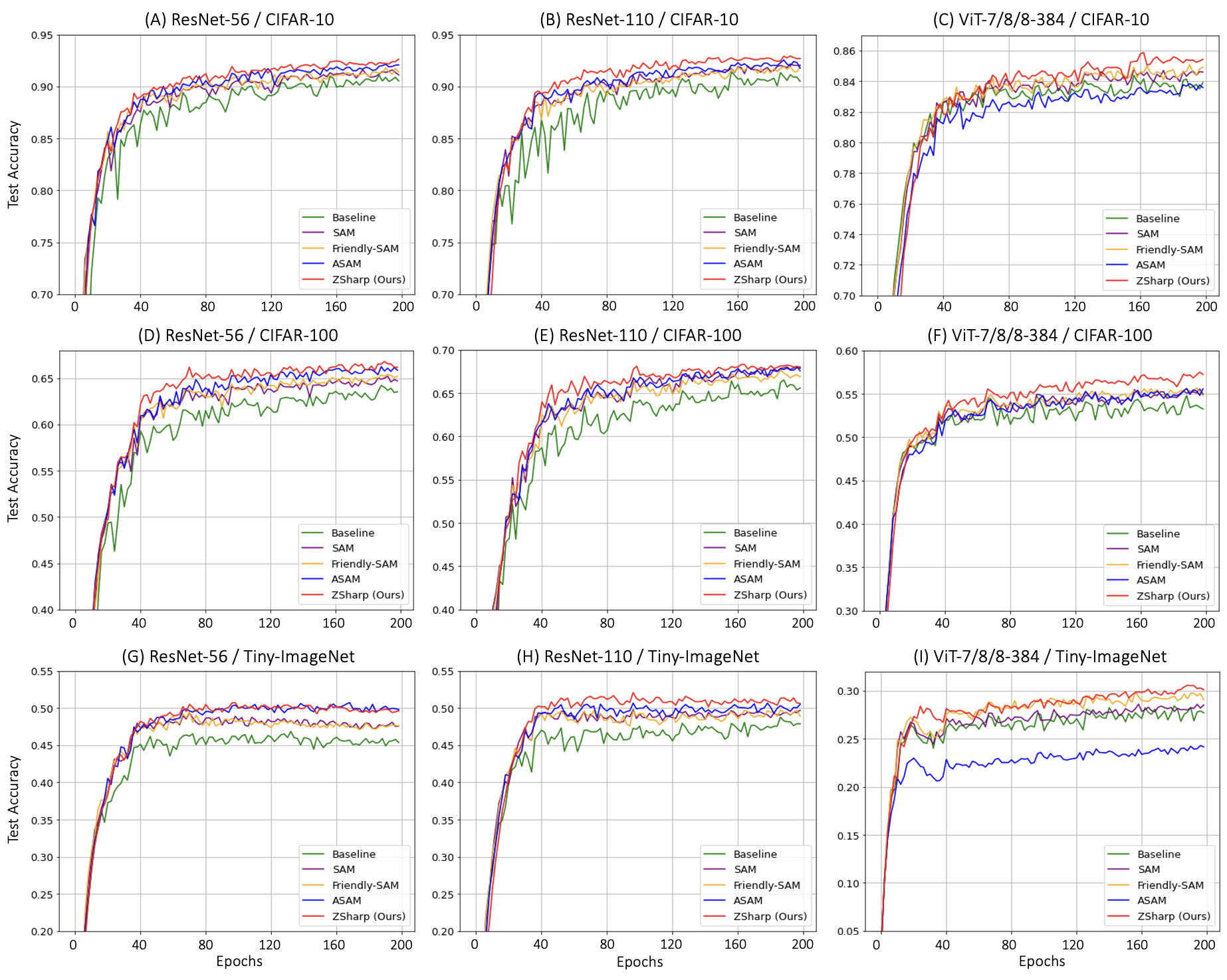}
\vspace{-15pt}
\caption{Top-1 Test Accuracy comparison on CIFAR-10 for ResNet-56/110 and ViT-7/8/8-384 models across different SAM variants: AdamW (Baseline)~\cite{adamw}, SAM~\cite{foret2021sharpness}, Friendly-SAM~\cite{kim2022friendly}, ASAM~\cite{kwon2021asam}, and ZSharp (Ours). The red dashed line indicates the baseline performance using AdamW~\cite{adamw} alone, highlighting the improvements achieved by sharpness-aware methods. ZSharp consistently outperforms other methods, demonstrating the effectiveness of ZNorm-based gradient filtering in enhancing generalization.}
\label{fig:allgraph}
\end{figure*}

\noindent \textbf{Experimental Settings. }
We evaluate ZSharp on three standard benchmarks: CIFAR-10~\cite{krizhevsky2009learning}, CIFAR-100~\cite{krizhevsky2009learning}, and Tiny‑ImageNet~\cite{tinyimagenet}. CIFAR-10 and CIFAR-100 each consist of 50,000 training and 10,000 test images at 32$\times$32 resolution across 10 and 100 classes, respectively. Tiny‑Image Net contains 90,000 training and 10,000 test images of size 64$\times$64 across 200 classes. We benchmark ZSharp using ResNet-56/110~\cite{he2016deep}, VGG16\_BN~\cite{simonyan2015very}. For ViT models, ViT-7/8/8-384 and ViT-7/8/12-768 denote Vision Transformers~\cite{dosovitskiy2020image} with 7 layers, 8 attention heads, patch sizes of 8, and MLP dimensions of 384 and 768, respectively. All trained without pre-trained weights. All models are trained for 200 epochs with a batch size of 256 using the AdamW optimizer~\cite{kingma2015adam, adamw} (initial learning rate 0.001, weight decay $5\times10^{-5}$). The learning rate is scheduled using a step decay policy, where it is multiplied by 0.75 every 10 epochs. For SAM~\cite{foret2021sharpness}, ASAM~\cite{kwon2021asam}, Friendly-SAM~\cite{kim2022friendly} and ZSharp, the perturbation radius $\rho$ is set to 0.05 as used in each paper, and we followed optimal hyperparameters written in papers. ZSharp uses Z-score filtering with $Q_p = 0.95$, retaining only the top 5\% of gradient components during the ascent step. All experiments are conducted on a single NVIDIA RTX 4090 GPU, and results are averaged over 3 different random seeds to ensure statistical robustness.

\subsection{Experimental Results}
\label{figures}
Figure~\ref{fig:allgraph} compares the test accuracy over training epochs for Baseline (AdamW), SAM, Friendly-SAM, ASAM, and the proposed ZSharp across different architectures and datasets.

Across all settings, ZSharp consistently achieves the highest or near-highest test accuracy, showing faster convergence and better final performance compared to other methods. In ResNet-56 and ResNet-110 on CIFAR-10 (A, B), ZSharp reaches higher accuracy earlier and maintains a margin over SAM variants. Similar improvements are observed for ViT-7/8/8-384 on CIFAR-10 (C), where ZSharp outperforms other methods throughout most of the training.

For CIFAR-100 (D–F), the advantage of ZSharp becomes more evident, with a clear gap over SAM and Friendly-SAM, and competitive or better performance compared to ASAM. The pattern persists on Tiny-ImageNet (G–I), where ZSharp not only surpasses the baseline and SAM variants but also shows greater stability, particularly for transformer architectures (I), which exhibit a larger improvement margin. These results demonstrate that applying Z-score gradient filtering in the ascent step of SAM can enhance generalization performance consistently across datasets and architectures.

\subsection{Hyperparameter Tuning}

\begin{table*}[h]
\caption{Training hyperparameters and results for all experiments, with settings identical to those in the Experimental Settings section except for varying \(Q_p\) values. Models include ResNet-56~\cite{he2016deep} and ViT-7/8/8-384~\cite{dosovitskiy2020image}, where ViT-7/8/384 denotes a Vision Transformer with 7 layers, 8 attention heads, a patch size of 8, and an MLP dimension of 384, all trained without pre-trained weights.}
\centering
\begin{adjustbox}{width=1\textwidth}
\begin{tabular}{lllll|lllll} 
\toprule
Network & Method & \(Q_p\) & Top-1 Test Acc. & Train Loss. & Network & Method & \(Q_p\) & Top-1 Test Acc. & Train Loss. \\
\midrule
\multirow{7}{*}{ResNet-56~\cite{he2016deep}}  & AdamW~\cite{adamw}               & N/A   & 0.9108 ± 0.0045 & 0.0057 ± 0.0013            & \multirow{7}{*}{ViT-7/8/8-384~\cite{dosovitskiy2020image}}  & AdamW~\cite{adamw}     & N/A  & 0.8398 ± 0.0028 & 0.0087 ± 0.0092\\
& SAM~\cite{foret2021sharpness}              & N/A   & 0.9160 ± 0.0021 & 0.0221 ± 0.0051            && SAM~\cite{foret2021sharpness}   & N/A   & 0.8432 ± 0.0032 & 0.0273 ± 0.0101\\
& ZSharp (Ours)                              & 0.95  & \textbf{0.9264 ± 0.0032} & 0.0630 ± 0.0064   && ZSharp (Ours) & 0.95  & \textbf{0.8543 ± 0.0029} & 0.0647 ± 0.0216\\
& ZSharp (Ours)                              & 0.90  & 0.9212 ± 0.0015 & 0.0710 ± 0.0061            && ZSharp (Ours) & 0.90  & 0.8482 ± 0.0031 & 0.0748 ± 0.0081\\
& ZSharp (Ours)                              & 0.85  & 0.9189 ± 0.0023 & 0.0679 ± 0.0067            && ZSharp (Ours) & 0.85  & 0.8424 ± 0.0043 & 0.0825 ± 0.0086\\
& ZSharp (Ours)                              & 0.80  & 0.9153 ± 0.0027 & 0.0731 ± 0.0053            && ZSharp (Ours) & 0.80  & 0.8421 ± 0.0038 & 0.0863 ± 0.0119\\
& ZSharp (Ours)                              & 0.75  & 0.9132 ± 0.0017 & 0.0789 ± 0.0079            && ZSharp (Ours) & 0.75  & 0.8378 ± 0.0027 & 0.0999 ± 0.0102\\
\bottomrule
\end{tabular}
\end{adjustbox}
\label{table:training_settings}
\end{table*}

\label{hyperparameter}
We evaluate the effect of the percentile threshold \(Q_p\) in ZSharp, which controls the proportion of gradient components retained after ZNorm filtering, selecting the top \((1-Q_p)\%\) based on Z-scores. A higher \(Q_p\) (e.g., 0.95) retains fewer components (top 5\%), focusing on significant directions for sharpness-aware optimization, while a lower \(Q_p\) (e.g., 0.75) retains more (top 25\%). At \(Q_p = 0.0\), ZSharp reduces to SAM, using the full gradient. We test \(Q_p \in \{0.75, 0.80, 0.85, 0.90, 0.95\}\) on ResNet-56 and ViT-7/8/8-384, with results in Table~\ref{table:training_settings}.

Table~\ref{table:training_settings} shows ZSharp achieves the highest Top-1 Test Accuracy at \(Q_p = 0.95\), with 0.9264 ± 0.0032 on ResNet-56~\cite{he2016deep} and 0.8543 ± 0.0029 on ViT-7/8/8-384~\cite{dosovitskiy2020image}, outperforming AdamW~\cite{adamw} and SAM~\cite{foret2021sharpness}. As \(Q_p\) decreases to 0.75, test accuracy nears SAM’s (e.g., 0.9132 on ResNet-56), reflecting ZSharp’s alignment with SAM’s behavior. Based on these results, we identify \(Q_p = 0.95\) as the optimal value and use it for all subsequent experiments to maximize generalization performance.


\end{document}